\theoremstyle{theorem}
\newtheorem{theorem}{Theorem}[section]
\newtheorem{lemma}[theorem]{Lemma}
\newtheorem{proposition}[theorem]{Proposition}
\newcommand{\bd}[1]{\mathbf{#1}}
\DeclareMathOperator*{\argmin}{arg\,min}
\title{One Representation to Rule Them All: Identifying Out-of-Support Examples in Few-shot Learning with Generic Representations}
\author{%
  Henry Kvinge, Scott Howland, Nico Courts,\\ {\textbf{Lauren A. Phillips, John Buckheit, Zachary New}}\\ {\textbf{Elliott Skomski, Jung H. Lee, Aaron Tuor,}}\\ {\textbf{Sandeep Tiwari, Jessica Hibler, Courtney D. Corley,}}\\ {\textbf{Nathan O. Hodas}} \\
  Pacific Northwest National Laboratory\\
  Seattle, WA, USA\\
  \texttt{first.last@pnnl.gov} \\
  % examples of more authors
  % \And
  % Coauthor \\
  % Affiliation \\
  % Address \\
  % \texttt{email} \\
  % \AND
  % Coauthor \\
  % Affiliation \\
  % Address \\
  % \texttt{email} \\
  % \And
  % Coauthor \\
  % Affiliation \\
  % Address \\
  % \texttt{email} \\
  % \And
  % Coauthor \\
  % Affiliation \\
  % Address \\
  % \texttt{email} \\
}
\begin{document}

\maketitle

\begin{abstract}
The field of few-shot learning has made remarkable strides in developing powerful models that can operate in the small data regime. Nearly all of these methods assume every unlabeled instance encountered will belong to a handful of known classes for which one has examples. This can be problematic for real-world use cases where one routinely finds `none-of-the-above' examples. In this paper we describe this challenge of identifying what we term `out-of-support' (OOS) examples. We describe how this problem is subtly different from out-of-distribution detection and describe a new method of identifying OOS examples within the Prototypical Networks framework using a fixed point which we call the generic representation. We show that our method outperforms other existing approaches in the literature as well as other approaches that we propose in this paper. Finally, we investigate how the use of such a generic point affects the geometry of a model's feature space.
\end{abstract}

\section{Introduction}
\label{sect-introduction}

Over the past decade, deep learning-based methods have achieved state-of-the-art performance in a range of applications including image recognition, speech recognition, and machine translation. There are many problems however, where deep learning's utility remains limited because of its need for large amounts of labeled data \cite{LeCun2015}. The field of few-shot learning \cite{Wang2018} aims to develop methods for building powerful machine learning models in the limited-data regime.

The common paradigm in few-shot learning is to assume that for each unlabeled instance, one has at least one labeled example belonging to the same class. At inference time then, classification of an unlabeled example $x$ simply involves determining which of a fixed number of known classes $x$ is most likely to belong to. In real-world problems on the other hand, it is frequently the case that one does not have labeled examples of every possible class that has support in a data distribution. This is particularly true in science and medical applications where it is time and cost prohibitive to have a subject matter expert sift through an entire dataset and identify all classes therein. Establishing methods of detecting whether or not unlabeled input belongs to any known class is thus critical to making few-shot learning an effective tool in a broad range of applications.

We define a datapoint to be {\emph{out-of-support (OOS)}} if it does not belong to a class for which we have labeled examples, but was still drawn from the same data distribution as the labeled examples we have. We call the problem of identifying such instances the {\emph{out-of-support detection problem}}. As we explain in Section \ref{subsect-OOD-dectection}, OOS detection resembles, but is distinct from, out-of-distribution (OOD) detection where one attempts to identify examples which were drawn from a different data distribution entirely, (see Figure \ref{fig-ood-vs-oos} for an illustration of the difference between these two types of problems). To our knowledge the OOS detection problem was first articulated in the literature only recently in \cite{wang2019out}, where two algorithms were proposed within the metric-based few-shot setting.

In this paper we describe a new approach to OOS detection which we call {\emph{Generic Representation Out-Of-Support (GROOS) Detection}}. The name is inspired by the concept of generic points in algebraic geometry, which are points for which all generic properties of a geometric object are true \cite{hartshorne2013algebraic}. Our method uses a so-called {\emph{generic representation}} to represent the data distribution as a whole but no individual class in particular. Like the methods proposed in \cite{wang2019out}, our method can be adapted to work with a range of metric-based few-shot models. For simplicity, in this paper we focus on a Prototypical Networks \cite{snell2017prototypical} setting where the generic representation is simply a point in feature space. To predict whether an unlabeled instance $q$ is OOS or not, one compares the distances from the encoding of $q$ to each class representation and the generic representation. If the image of $q$ is sufficiently close to the generic representation and sufficiently far from all class representations, it is predicted to be OOS. We state a pair of inequalities \eqref{eqn-generic-condition} relating the distances between query points, class prototypes, and the generic representation which need to be satisfied in order for GROOS detection to be able to correctly predict when $q$ is OOS and also correctly predict the class of $q$ when $q$ is in-support. We analyze how these constraints effect the geometry of a model's feature space, characterizing its structure through three Propositions (Propositions \ref{prop-decomp}, \ref{prop-viable-regions}, and \ref{prop-relationship between cells}). We also show that for GROOS to be successful, additional `second-order' relationships between prototypes and the generic representation need to hold.

We benchmark GROOS detection against two recently proposed methods - LCBO and MinDist \cite{wang2019out} - as well as an additional method - Background OOS detection - which we describe in this paper. We find that GROOS detection not only on average outperforms previous benchmarks (Section \ref{sect-experiments-standard}), but an adapted version of GROOS called {\emph{Centered GROOS}} tends to outperform other OOS detection methods in settings that require significant model generalization (Section \ref{subsect-generalization-experiments}). Despite the strong relative performance of Centered GROOS detection in this latter setting, it is clear that the community still has a considerable amount of work to do before few-shot models can satisfactorily detect OOS examples when evaluated on datasets significantly different from those that they were trained on.

In summary, our contributions in this paper include:
\begin{itemize}[leftmargin=13pt,noitemsep,topsep=0pt]
\item We introduce the GROOS detection method, which is designed to solve the out-of-support detection problem in few-shot learning using a generic representation.
\item We benchmark GROOS detection against existing metric-based methods in the literature and an additional OOS detection method, Background OOS Detection, which we describe in this paper. We show that GROOS out-performs these approaches both in a traditional few-shot train-evaluation setting, and in a more challenging setting where models are trained on ImageNet and then evaluated on a diverse range of datasets.
\item We state two inequalities relating class prototypes, the generic representation, and encoded query points, which must be satisfied in order for both OOS detection and standard in-support classification to be effective. Motivated by these inequalities we prove three propositions which relate feature space geometries that arise from the standard Prototypical Networks problem and the feature space geometries that arise from GROOS.
\end{itemize}

\section{Background and related work}
\label{sect-related-work}

%In this paper we will always assume that data is collected in some ambient space $X$, (for example, an image space). We denote a neural network model using the notation $(-)_\theta$, (e.g. $f_\theta$) where $\theta \in \mathbb{R}^N$ is the set of $N$ parameters which characterize the network.

\subsection{Few-shot learning and Prototypical Networks}
\label{sect-protonets}

There are a range of approaches that have been used to address the challenges of few-shot learning. Fine-tuning methods \cite{chen2019closer,chowdhury2021few} use transfer learning followed by fine-tuning to train models with limited data. Data augmentation methods \cite{hariharan2017low} leverage augmentation and generative approaches to produce additional training data. Gradient-based meta-learning \cite{finn2017model,nichol2018first} is a class of methods that use sophisticated optimization techniques to learn strong initial weights which can be adapted to a new task with a small number of gradient steps. The algorithm we propose in this paper is related to a fourth class of algorithms called metric-based models. In these models an encoder function is trained to embed data into a space where a distance metric (either hard-coded or learned) captures some task-appropriate notion of similarity. Well-known examples of metric-based few-shot models include Prototypical Networks \cite{snell2017prototypical}, Matching Networks \cite{NIPS2016_6385}, and Relational Network \cite{sung2018learning}.

%In this section we review the field of few-shot learning, which focuses on understanding how to build models that can achieve good performance even when a very limited amount of training data is available. Many state-of-the-art few-shot learning methods can be placed in one of two classes: metric-based models and gradient-based meta-learning models. The former learns an encoder function $f_\theta: X \rightarrow \mathbb{R}^n$ from the data's ambient space $X$ into a feature space $\mathbb{R}^n$.

%In this section we briefly review the standard few-shot training and inference framework \cite{NIPS2016_6385}. The nature of this framework is critical to the notion of out-of-support examples which we introduce in the Section \ref{subsect-oos}. 

An {\emph{episode}} is the basic unit of few-shot inference and training. It consists of a set $S$ of labelled examples known as the {\emph{support set}} and an unlabeled set $Q$ known as the {\emph{query set}}. Within an episode, a model uses elements in $S$ to predict labels for elements in $Q$. We assume that elements of $S$ belong to classes $C^{in} = \{1,\dots,k\}$. For convenience, we decompose $S$ into a disjoint union: $S = \bigcup_{c \in C^{in}} S_c$, where $S_c$ contains only those elements of $S$ with label $c$. We will assume that the size $n = |S_c|$ is constant for all $c \in C^{in}$. The integer $n$ is known as the {\emph{shots}} of the episode, while the integer $k$ is known as the {\emph{ways}}. In this paper we will assume that $Q$ has been drawn from a distribution $p$, and each set $S_c$ has been drawn from the conditional distribution $p(y = c)$. 

By {\emph{few-shot training}} we mean the process of calculating the loss for an entire episode and then using that loss to update the weights of the model. {\emph{Few-shot inference}} has an analogous meaning. A {\emph{few-shot split}} is a partition of a dataset into train and test sets by class, so that examples from each class are contained in either the train or test split, but not in both. 

Prototypical networks (ProtoNets) \cite{snell2017prototypical} uses an encoder function $f: X \rightarrow \mathbb{R}^d$ to map elements of both $Q$ and $S$ into metric space $\mathbb{R}^d$ (which we will always assume is equipped with the Euclidean metric). In $\mathbb{R}^d$, a centroid $\gamma_c$ is formed for each set $f_\theta(S_c)$. $\gamma_c$ is referred to as the {\emph{prototype}} which represents class $c$ in $\mathbb{R}^d$. The model predicts the class of an unlabelled query point $q$ based on the solution to $\argmin_{c \in C^{in}}||\gamma_c - f_\theta(q)||$. Note that in the case where one needs probabilities associated with a prediction, one can apply a softmax function to the distance vector $[-d_c]_{c \in C^{in}}$ where $d_c = ||\gamma_c - f_\theta(x)||$. 

\subsection{The out-of-support detection problem}
\label{subsect-oos}

As mentioned in the Introduction, it is commonly assumed in the literature that all elements of $Q$ have a label from $C^{in}$. It was observed in \cite{wang2019out} that in many real-world cases, this assumption is unrealistic. In that work the authors referred to an example $q \in Q$ that does not belong to a class in $C^{in}$ as being ``out-of-episode''. We feel it is more appropriate to describe such examples as being {\emph{out-of-support}} (OOS), since any elements found in $Q$ can be said to be part of the episode. Following \cite{wang2019out} we decompose $Q$ as $Q = Q^{in} \cup Q^{out}$ where $Q^{in}$ are those elements that are in-support and $Q^{out}$ are those elements that are OOS. It is also convenient to use $C$ to denote the set of all labels on elements from $S\cup Q$, with $C$ decomposing as the disjoint union $C = C^{in} \cup C^{out}$ where $C^{out}$ are simply those classes for which there are unlabeled examples in $Q$ but no labeled examples in $S$. Note that the user generally does not have knowledge of $C^{out}$.

The {\emph{out-of-support (OOS) detection problem}} then involves identifying those elements of $Q$ that do not belong to any class in $C^{in}$. All the methods for OOS detection described or introduced in this paper use a confidence score $\varphi: X \rightarrow \mathbb{R}$ that maps a query point $q \in Q$ to a value in $\mathbb{R}$. In general, $\varphi$ also depends on the full support set $S$ as well as the encoder $f_\theta$, but to simplify notation we assume these dependencies are implicit. %In all methods (except MinDist) a larger $\varphi(q)$ value indicates that $q$ is more likely to be OOS.

The authors of \cite{wang2019out} proposed two methods for OOS detection. Both are presented as an additional component that can be added to Prototypical Networks and it is in this context that we will describe and evaluate them. The first uses a function called the {\emph{Minimum Distance Confidence Score}} (MinDist), $\varphi_{dist}: X \rightarrow \mathbb{R}$ which is defined as $\varphi_{dist}(q,f_\theta) = -\min_{c \in C^{in}}||\gamma_c - f_\theta(q)||$. A query $q$ is predicted to be OOS if $\varphi_{dist}(q) < t$ for some $t < 0$. The second method proposed in \cite{wang2019out} is the {\emph{Learnable Class BOundary (LCBO) Network}} which is a parametric class-conditional confidence score $\varphi_{LC}: X \rightarrow \mathbb{R}$. $\varphi_{LC}$ uses a small, fully-connected neural network $h_{\theta'}: \mathbb{R}^d \rightarrow \mathbb{R}$ to produce scores for each prototype/query pair $(q,\gamma_c)$. The confidence score $\varphi_{LC}$ is defined as $\varphi_{LC}(q) = \max_{c \in C^{in}}\big(h_{\theta'}(\gamma_c,f_\theta(q))\big)$. The model predicts that $q$ is OOS if $\varphi_{LC}(q) < t$ for some predetermined threshold $t$. We note that the authors of \cite{wang2019out} used an additional term in the loss function to encourage their models to correctly identify OOS examples. We did not find the addition of such loss terms necessary to achieve good performance with the models introduced in this paper. 

\subsection{Out-of-distribution detection}
\label{subsect-OOD-dectection}

Out-of-distribution (OOD) detection aims to develop methods that can identify whether or not a data point $x$ was drawn from some known distribution $p$. Methods for doing this within the context of deep learning models include: using a model's largest softmax output value as a confidence score \cite{hendrycks2016baseline, lakshminarayanan2017simple} and ODIN \cite{lee2018training} which suggests identifying OOD examples through the use of model gradients and sofmax temperature scaling. Standard benchmarks for OOD detection focus on using OOD detection methods to identify examples drawn from very visually distinct distributions. For example, a common experiment attempts to detect Gaussian noise or MNIST \cite{lecun1998gradient} images injected into the CIFAR10 dataset \cite{krizhevsky2009learning}.
 
OOS detection differs from OOD detection in that, in general, the conditional distributions corresponding to elements belonging to $C^{in}$ and $C^{out}$ only vary in subtle and arbitrary ways. Consider the example summarized in Figure \ref{fig-ood-vs-oos} where $S$ and $Q$ consist of images of dogs. While it is true that the distribution of dog images belonging to classes $C^{in} = \{\text{Newfoundland, pug}\}$ is different than those belonging to classes $C^{out} = \{\text{Labrador, Tibertan terrier}\}$, these differences are slight (and focus on very specific aspects of the input) relative to differences in distribution that OOD detection methods are designed to detect. Indeed, \cite{wang2019out} showed that a few-shot analogue of \cite{hendrycks2016baseline} applied to a ProtoNet model struggled on the OOS detection problem. Additionally, OOD detection methods generally assume that even if a model has not seen examples of OOD data, it has seen many examples of in-distribution data. This is not the case for few-shot models which only have a handful of classes that they can use to characterize ``in-distribution''. In fact, in the generalization-focused evaluation setting described in Section \ref{subsect-generalization-experiments}, few-shot models could be described as operating exclusively out-of-distribution in relation to their training set. Finally, while OOD examples are defined with respect to an entire dataset, OOS examples are only defined via a small support set, and this definition can vary from episode to episode.  As suggested in \cite{wang2019out}, all these differences argue for identifying few-shot OOS detection as a problem which is distinct from OOD detection, requiring its own set of methods.

\begin{figure}[t]
\centering
\includegraphics[width=.98\columnwidth]{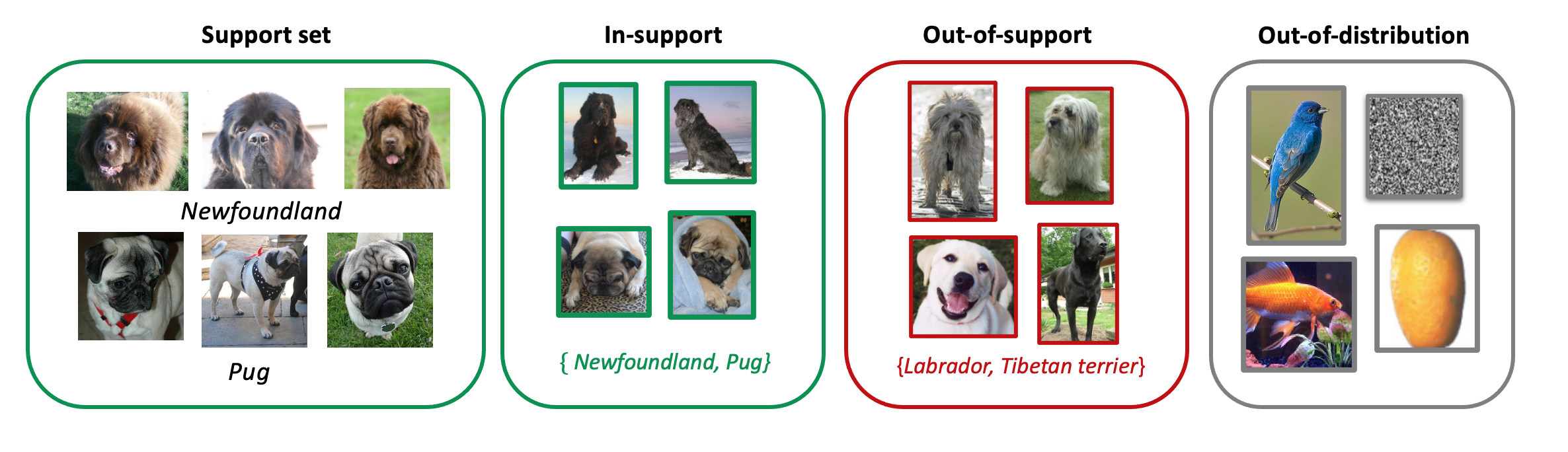} 
\caption{A diagram illustrating the difference between out-of-support detection and out-of-distribution detection for a few-shot task where one attempts to identify images of Newfoundlands and pugs from a dataset of dog images.}
\label{fig-ood-vs-oos}
\end{figure}

\section{OOS detection with a generic point}
\label{sect-algo-descript}

In this section we describe our proposed {\emph{Generic Representation Out-Of-Support (GROOS) Detection}} method. Let $f_\theta: X \rightarrow \mathbb{R}^d$ be the encoder (for example, when $X$ is an image space, then $f_\theta$ might be a ResNet \cite{resnet2016} with the final linear classification layer removed). Let $L: \mathbb{R}^d \rightarrow \mathbb{R}^d$ be an affine map, so that $L(x) = Wx + b$ for some matrix (weights) $W$ and vector (bias) $b$. We construct a new encoder by composing $h_\theta = L \circ f_\theta: X \rightarrow \mathbb{R}^d$. 

Next choose a point $\gamma_{oos} \in \mathbb{R}^d$ which will be called the {\emph{generic representation}} and a threshold $0 \leq t \leq 1$. There are many potential choices for $\gamma_{oos}$ but we find that the origin works well in practice. Inference with $h_\theta$ is similar to inference with the standard ProtoNets (Section \ref{sect-protonets}). For an $n$-shot, $k$-way support set $S = \cup_{c \in C^{in}} S_c$, with support set labels $C^{in} = \{1,\dots,k\}$, and query $q$, $h_\theta(S)$ and $h_\theta(q)$ are calculated and centroid prototypes $\gamma_c$ are computed for each set $h_\theta(S_c)$ with $c \in C^{in}$. We compute the vector $\bd{d}_q := (d_1, \dots, d_k,d_{oos})$ where $d_i := ||\gamma_i - h_\theta(q)||.$ Finally, let $softmax: \mathbb{R}^{k+1} \rightarrow \mathbb{R}^{k+1}$ be the standard softmax function. Following the notation in Section \ref{subsect-oos} we define $\varphi_{gen}: X \rightarrow \mathbb{R}$ to be $\varphi_{gen}(q) :=[softmax(-\bd{d}_q)]_{k+1}$ where $[softmax(-\bd{d}_q)]_{k+1}$ is the $(k+1)$st output coordinate corresponding to encoded query distance from $\gamma_{oos}$. If $\varphi_{gen}(q) > t$ then we predict that $q$ is OOS. If $\varphi_{gen}(q) < t$, then we predict that $q$ is in-support and we use the other $k$ softmax outputs from $softmax(-\bd{d}_q)$ to predict its class. Informally, this process consists of comparing the distance of the encoded query point from the generic representation to its distance to other support prototypes. If the query is sufficiently closer to the generic representation than it is to other prototypes, then it is predicted as OOS. This process is summarized in Algorithm \ref{algo-oos-prototype}.

\begin{algorithm} 
\caption{Generic Representation Out-Of-Support (GROOS) Detection} 
\label{algo-oos-prototype} 
\begin{algorithmic}
\REQUIRE{Encoder function $h_\theta: X \rightarrow \mathbb{R}^d$, generic representation $\gamma_{oos} \in \mathbb{R}^d$, support set $S = S_1 \cup \dots \cup S_k$ with corresponding label set $C^{in} = \{1,\dots,k\}$, query $q$, threshold $0 \leq t \leq 1$.}
    \FOR{$c \in C^{in}$}
        \STATE Compute prototype centroid $\gamma_c$ from $h_\theta(S_c)$
        \STATE Compute $d_{c} = ||\gamma_c - h_\theta(q)||$
    \ENDFOR
    \STATE Compute $d_{oos} = ||\gamma_{oss} - h_\theta(q)||$
        \STATE Set $\bd{d}_q = (d_{1}, \dots, d_{k},d_{oos})$ and compute $\varphi_{gen}(q) = [Softmax(-\bd{d}_q)]_{k+1}$
        \IF{$\varphi_{gen}(q) > t$}
            \STATE $q$ is predicted as OOS
        \ELSE 
            \STATE $q$ is predicted as in-support, belonging to class $c^* = \argmin_{c \in C^{in}}d_{c}$.
    	\ENDIF
\end{algorithmic}
\end{algorithm}

%Experimentally we found that the addition of the layer $L$ to $f_\theta$ above was critical. Informally, our experiments suggested that without the additional flexibility to the embedding given by $L$, $f_\theta$ was not able to position points in feature space in such a way as to make $\gamma_{oos}$ an effective generic representation of the data distribution (see Section \ref{subsect-how-do-prototypes-work} for further analysis of this problem).

One can ask what metric properties an encoded dataset $h_\theta(D)$ must have in order for GROOS detection to be effective. For simplicity we assume that prototypes $\gamma_1,\dots,\gamma_k$ and generic representation $\gamma_{oos}$ are fixed (empirically we find that prototypes are fairly stable when the number of shots is high enough so this is not an unreasonable approximation). (1) To ensure in-support examples are always predicted correctly, $h_\theta$ must map any $x \in D$ with label $c \in C$ closer to $\gamma_c$ than to $\gamma_1, \dots, \gamma_{c-1}, \gamma_{c+1}, \dots, \gamma_{k}, \gamma_{oos}$. That is $||h_\theta(x) - \gamma_c|| < ||h_\theta(x) - \gamma_{c'}||$ for all $c' \in C\cup \{oos\}$ such that $c \neq c'$. (2) One the other hand, when $c$ is not represented in the support set, then $h_\theta(x)$ must be closer to $\gamma_{oos}$ than to any other class prototype which is not $\gamma_c$ (which does not appear in the episode). Specifically, $||h_\theta(x) - \gamma_{oos}|| < ||h_\theta(x) - \gamma_{c'}||$ for all $c' \in C$ such that $c' \neq c$. These inequalities can be combined for the single expression
\begin{equation}\label{eqn-generic-condition}
    ||h_\theta(x) - \gamma_c|| < ||h_\theta(x) - \gamma_{oos}||< ||h_\theta(x) - \gamma_{c'}|| \quad \text{for } c' \in C, c' \neq c.
\end{equation}

Inequality \eqref{eqn-generic-condition} suggests that if one is not able to actually train $h_\theta$ on dataset $D$ (or a similar dataset), and hence $h_
\theta$ is not able to learn how to arrange encoded data around $\gamma_{oos}$, then another sensible option is to choose $\gamma_{oos}$ to be the centroid of $h_\theta(S\cup Q)$. We call this alternative version of GROOS detection {\emph{Centered GROOS Detection}}. We will see that it works better than the standard version of GROOS detection when the test set differs significantly from the training set.

% where 
%\begin{equation*}
%    \gamma_{oos} = \frac{1}{|h_\theta(S\cup Q)|}\sum_{x \in h_\theta(S\cup Q)}x,
%\end{equation*}

\subsection{Background detection}

We introduce a second OOS detection model to serve as an additional benchmark. We call it {\emph{Background Detection}} since it was inspired by the ``background class'' described in \cite{zhang2017universum}. Background detection consists of an encoder function $h_\theta: X \rightarrow \mathbb{R}^d$ such as a ResNet, with its final classification layer replaced with a linear layer $L: \mathbb{R}^d \rightarrow \mathbb{R}^d$ and two predetermined constants $M > 0$ and $0 \leq t \leq 1$. An episode with support set $S = \cup_{c \in C^{in}}S_k$ and query $q$ proceeds with the usual calculation of class centroids $\gamma_c$ for $c \in C^{in}$. Using constant $M$ and distances $||\gamma_c - h_\theta(q)||$ between encoded query and prototypes the vector $\bd{d}_q := (d_1, \dots, d_k,M)$ is obtained. The confidence function $\varphi_{back}: X \rightarrow \mathbb{R}$ associated with this method is then: $\varphi_{back}(q) := \big[softmax(-\bd{d}_{q})\big]_{k+1}$. Query $q$ is predicted to be OOS if $\varphi_{back}(q) > t$.

\section{Experiments and analysis}
\label{sect-experiments}

\subsection{Standard few-shot evaluation}
\label{sect-experiments-standard}

Our first set of experiments look at how well OOS detection methods (MinDist, LCBO, Background Detection, GROOS, and Centered GROOS) can identify OOS examples in the setting where the base model is trained and evaluated on few-shot splits drawn from the same dataset. That is, we partition the classes of the dataset between train and test. We focus on the datasets: CIFAR100 \cite{krizhevsky2009learning} (CC-BY 4.0), CUB-200 \cite{WahCUB_200_2011} (CC0 1.0), and Omniglot \cite{Lake1332} (MIT License).

All models were trained for four days of wall clock time on a single Tesla P100 GPU for a total of between 250,000 and 500,000 training episodes in that time. All performances stabilized around the lower end of that range. All models used a ResNet18 encoder with the final linear layer removed and were initialized with the standard ImageNet (CC-BY 4.0) pre-trained weights from Torchvision \cite{marcel2010torchvision}. We address the question of how performance differs for different sizes of encoder in Section \ref{appendix-encoder-size} of the Appendix. We used the Adam optimizer for training, with a learning rate of $1\times 10^5$, a weight decay factor of $5\times 10^{-5}$, and $\beta$ values of $0.9$ and $0.999$. All results correspond to $5$-shot, $5$-way episodes, with $8$ queries per support class and a total of $40$ OOS images introduced per episode (that is, $50\%$ of all images in the query were OOS). All images were resized to $224\times224$ before being fed through the model. To evaluate each model, we sampled $1000$ episodes from the corresponding few-shot test set. To complete the evaluation, we computed the area under precision recall curve (AUPR) and area under the ROC curve (AUROC) for each model with respect to the evaluation queries and multiplied these by $100$.

The result of these experiments can be found in Table \ref{fig-same-dataset}. We bold all scores that are within $0.5$ of the top model (in terms of both AUPR and AUROC), putting an $*$ on the top score for each column. As can be seen, in two of the three datasets used, GROOS outperforms other methods by at least $1.0$ both in terms of AUPR and AUROC. On Omniglot, MinDist, LCBO, GROOS, and Centered GROOS all do close to perfect. We include this last experiment to demonstrate that when a sufficiently strong encoder is used for a simpler dataset, then a range of OOS detection methods can do quite well.   

\begin{table*}
    \centering
    {\def\arraystretch{1.5}%
    \setlength{\tabcolsep}{4pt}
    \begin{tabular}{ccccccc}
 & \multicolumn{2}{c}{CIFAR100}  & \multicolumn{2}{c}{CUB-200} & \multicolumn{2}{c}{Omniglot} \\
 & AUPR & AUROC  & AUPR & AUROC  & AUPR & AUROC \\
 \hline
MinDist & {\small{88.4$\pm$0.1}} & {\small{88.6$\pm$0.1}} & {\small{89.0$\pm$0.2}} & {\small{89.2$\pm$0.2}} & {\small{\textbf{99.4$\pm$0.1}}} & {\small{\textbf{99.5$\pm$0.1$^*$}}}  \\
LCBO & {\small{83.2$\pm$0.5}} & {\small{84.7$\pm$0.4}} & {\small{85.8$\pm$0.5}}  & {\small{87.6$\pm$0.3}} & {\small{\textbf{99.2$\pm$0.2}}} & {\small{\textbf{99.3$\pm$0.1}}}    \\
\hline
\textbf{Background (ours)} & {\small{87.2$\pm$0.3}} & {\small{86.9$\pm$0.3}} & {\small{88.8$\pm$0.7}} & {\small{88.0$\pm$0.7}} & {\small{98.9$\pm$0.1}} & {\small{98.9$\pm$0.1}}\\
\textbf{GROOS (ours)} & {\small{\textbf{90.1$\pm$0.2$^*$}}} & {\small{\textbf{90.2$\pm$0.3$^*$}}} & {\small{\textbf{90.9$\pm$0.6$^*$}}} & {\small{\textbf{90.6$\pm$0.7$^*$}}} & {\small{\textbf{99.6$\pm$0.1$^*$}}} & {\small{\textbf{99.5$\pm$0.1$^*$}}}  \\
\textbf{Centered GROOS (ours)} & {\small{88.9$\pm$0.9}} & {\small{88.4$\pm$0.8}} & {\small{89.6$\pm$0.2}}  & {\small{89.5$\pm$0.1}} & {\small{\textbf{99.5$\pm$0.1}}} & {\small{\textbf{99.5$\pm$0.1$^*$}}} \\
\hline
\end{tabular}}
    \caption{The area under the ROC curve (AUROC) and area under the precision-recall curve (AUPR) for a range of few-shot OOS detection methods. We put an $*$ next to the top score in each column and set in bold all the rest of the scores in the column that are within $0.5$ of this.}
    \label{fig-same-dataset}
\end{table*}

%All tests used a ResNet18 encoder with its final classification linear replaced with a linear layer (with bias) mapping from $\mathbb{R}^{512}$ to $\mathbb{R}^{512}$.

\subsection{Generalization experiments}
\label{subsect-generalization-experiments}

The experiments in the previous section simulated the situation where one has access to a training dataset which is similar to the data that one wants to apply the model to during inference. However, as pointed out in the Introduction, there are many applications of few-shot learning where one does not have access to such a training set. In these cases it is important that a model can perform well, even when the dataset one wants to perform inference on is substantially different from the data used for training. 

We ran experiments to evaluate how adaptable MinDist, LCBO, Background Detection, GROOS Detection, and Centered GROOS Detection were when a dataset from a previously unseen distribution was introduced at inference time. We chose to train our networks on a few-shot training split of ImageNet, as ImageNet has been shown to generally produce rich and flexible feature extractors \cite{kornblith2019better,chowdhury2021few}, and then test on: the few-shot ImageNet testset, CIFAR100, Omniglot, Aircraft \cite{maji13fine-grained}, Describable Textures \cite{cimpoi14describing}, and Fruits 360 \cite{murecsan2018fruit}.\footnote{Aircraft is available exclusively for non-commercial research purposes, Describable Textures is available for research purposes, and Fruits 360 is covered by 
CC BY-SA 4.0} All models used the same encoder, hyperparameters, and training scheme as that described in Section \ref{sect-experiments-standard}. 

\begin{table*}
    \centering
    {\def\arraystretch{1.5}%
    \setlength{\tabcolsep}{4pt}
    \begin{tabular}{cccccccc}
&  ImageNet & CIFAR100  & Omniglot &  Aircraft & Textures & Fruits\\
 \hline
MinDist & {\small{95.0$\pm$0.1}} & {\small{\textbf{80.1$\pm$0.6}}} &  {\small{\textbf{85.5$\pm$0.7$^*$}}} & {\small{59.4$\pm$0.3}} & {\small{72.7$\pm$0.8}} & {\small{95.3$\pm$0.4}}  \\
LCBO & {\small{92.6$\pm$0.1}} & {\small{76.3$\pm$1.0}} & {\small{68.5$\pm$1.6}} & {\small{54.7$\pm$0.5}} & {\small{65.3$\pm$1.5}} & {\small{89.0$\pm$2.2}}   \\
\hline
\textbf{Background (ours)} & {\small{93.6$\pm$0.1}}  & {\small{77.5$\pm$0.9}} & {\small{58.6$\pm$1.9}} & {\small{58.4$\pm$0.4}} & {\small{71.8$\pm$0.8}} & {\small{89.8$\pm$1.0}} \\
\textbf{GROOS (ours)} & {\small{\textbf{95.7$\pm$0.1$^*$}}} & {\small{74.3$\pm$3.0}} & {\small{74.6$\pm$1.4}} & {\small{53.8$\pm$0.3}} & {\small{75.2$\pm$1.0}} & {\small{91.2$\pm$0.8}}  \\
\textbf{Centered GROOS (ours)} & {\small{95.0$\pm$0.2}} & {\small{\textbf{80.6$\pm$0.8$^*$}}} & {\small{82.1$\pm$0.5}} &  {\small{\textbf{61.8$\pm$4.7$^*$}}} & {\small{\textbf{82.3$\pm$0.2$^*$}}} & {\small{\textbf{96.2$\pm$0.3$^*$}}} \\
\hline
\end{tabular}}
    \caption{The area under the ROC curve (AUROC) for a range of few-shot OOS detection methods which were all trained on a few-shot training split of ImageNet and then evaluated on a range of datasets. We put an $*$ next to the top score in each column and set in bold all the rest of the scores in the column that are within $0.5$ of this.}
    \label{fig-generalization-AUROC}
\end{table*}

\begin{table*}
    \centering
    {\def\arraystretch{1.5}%
    \setlength{\tabcolsep}{4pt}
    \begin{tabular}{cccccccc}
&  ImageNet & CIFAR100  & Omniglot &  Aircraft & Textures & Fruits\\
 \hline
MinDist & {\small{\textbf{95.0$\pm$0.1}}} & {\small{\textbf{79.4$\pm$0.5}}} & {\small{\textbf{86.3$\pm$0.7$^*$}}} & {\small{59.3$\pm$0.2}} & {\small{73.8$\pm$0.8}} & {\small{95.5$\pm$0.3}}  \\
LCBO & {\small{91.8$\pm$0.1}} & {\small{73.3$\pm$1.4}} & {\small{66.4$\pm$1.8}} & {\small{53.8$\pm$0.4}} & {\small{64.5$\pm$1.8}} & {\small{88.6$\pm$2.6}}  \\
\hline
\textbf{Background (ours)} & {\small{93.7$\pm$0.0}} & {\small{77.1$\pm$0.8}} & {\small{76.6$\pm$0.7}} & {\small{58.3$\pm$0.3}} & {\small{70.1$\pm$0.7}} & {\small{92.8$\pm$0.5}} \\
\textbf{GROOS (ours)} & {\small{\textbf{95.5$\pm$0.1$^*$}}} & {\small{72.1$\pm$2.6}} & {\small{75.7$\pm$1.2}} & {\small{54.3$\pm$0.3}} & {\small{71.1$\pm$0.9}} & {\small{92.1$\pm$0.6}} \\
\textbf{Centered GROOS (ours)} & {\small{94.7$\pm$0.3}} & {\small{\textbf{79.8$\pm$0.9$^*$}}} & {\small{82.1$\pm$0.7}} & {\small{\textbf{61.7$\pm$0.4$^*$}}} & {\small{\textbf{79.9$\pm$0.3$^*$}}} & {\small{\textbf{96.4$\pm$0.1$^*$}}} \\
\hline
\end{tabular}}
    \caption{The area under the precision recall curve (AUPR) for a range of few-shot OOS detection methods which were all trained on a few-shot training split of ImageNet and then evaluated on a range of test datasets. We put an $*$ next to the top score in each column and set in bold all the rest of the scores in the column that are within $0.5$ of this.}
    \label{fig-generalization-AUPR}
\end{table*}

We find that in this setting, performance is generally worse for all model types. This is not surprising since the models are essentially operating on out-of-distribution data at inference time. Aircraft is a particularly challenging dataset for models that have not seen the corresponding training set. A comparison of the error bars in Tables \ref{fig-generalization-AUROC} and \ref{fig-generalization-AUPR} on the one hand and Table \ref{fig-same-dataset} on the other illustrates that when operating on OOD data there is more variation between training runs. Nonetheless, Centered GROOS detection performs better than other methods on $4$ out of the $5$ OOD datasets, with the exception of Omniglot where MinDist does substantially better. On the in-distribution test set ImageNet test, GROOS achieves better performance than Centered GROOS, confirming our hypothesis that GROOS is better to use when inference data aligns with training data and Centered GROOS is better otherwise. Of all the datasets presented to the models in these tests, Omniglot is probably the most ``unlike'' ImageNet. We conjecture that for mildly OOD datasets such as CIFAR100, Aircraft, and Fruits, Centered GROOS tends to perform better, while for significantly OOD datasets such as Omniglot, the simpler MinDist model might be a better choice.

\subsection{Generic points: feature space geometry and decision boundaries}
\label{subsect-how-do-prototypes-work}

\begin{figure*}[t]
\centering
\includegraphics[width=.3\columnwidth]{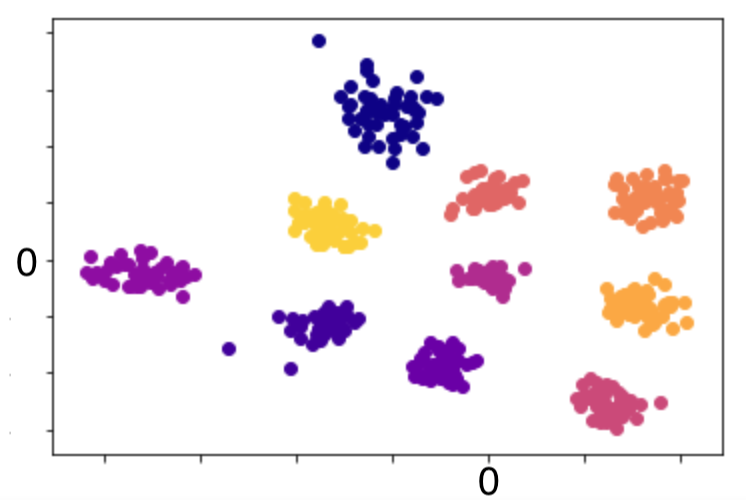} 
\includegraphics[width=.3\columnwidth]{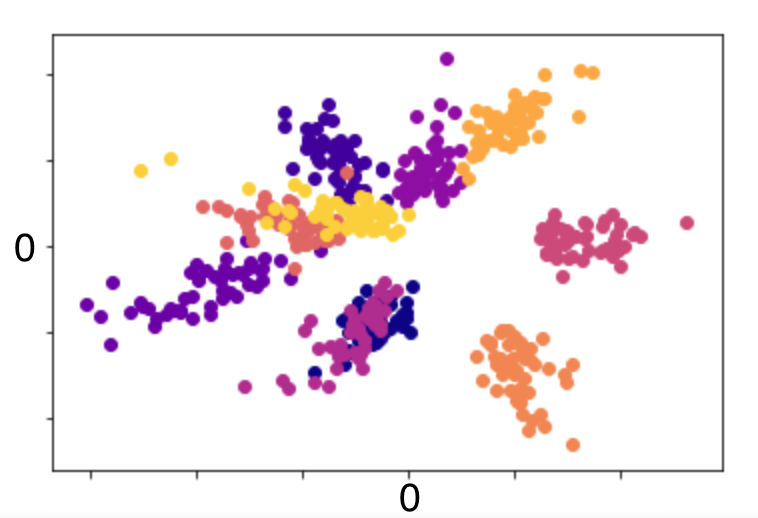} 
\includegraphics[width=.3\columnwidth]{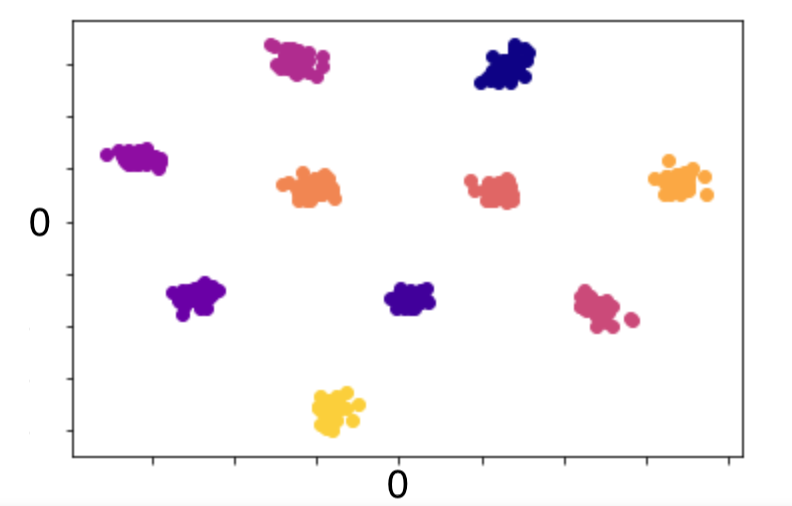} 
\caption{Visualizations of the feature space of a ResNet50 encoder (left) trained without OOS examples, (center) with OOS examples using a generic representation, (right) using a background class.}
\label{fig-feature-space}
\end{figure*}

% We attached a linear layer to the encoder that forced the model to embed points in $\mathbb{R}^2$ instead of $\mathbb{R}^{512}$.

%Note that these visualizations are for the 10 class support set that the model was trained on.

In this section we analyze the geometry of the feature space induced by the use of a generic point to detect OOS examples. This is motivated by the empirical observation that the use of a generic point seems to change the way that a model clusters classes. In the low-dimensional setting of Figure \ref{fig-feature-space} for example, we observe that while moving from standard ProtoNets to ProtoNets with Background Detection appears to simply tighten clusters, moving to ProtoNets with a generic point results in a distinct radial cluster structure around the generic point (at $(0,0)$). Below we will try to formalize some of the intuition gained from these experiments so that we can make precise statements about the different kinds of geometry induced by these different problem formulations. Proofs for all propositions can be found in Section \ref{sect-proofs}. 

Recall that an {\emph{affine hyperplane in $\mathbb{R}^d$}} is a translation of a $(d-1)$-dimensional subspace. Alternatively, a non-zero vector $v \in \mathbb{R}^d$ and constant $b \in \mathbb{R}$ define an affine hyperplane via the expression $H := \{w \in \mathbb{R}^d \;|\;\langle w, v \rangle = b \}$. Note that any affine hyperplane $H$ decomposes $\mathbb{R}^d$ into two {\emph{open half-spaces}}: $H^+ := \{w \in \mathbb{R}^d \;|\;\langle w, v \rangle > b \}$ and $H^- := \{w \in \mathbb{R}^d \;|\;\langle w, v \rangle < b \}$. For any two distinct points $x_1,x_2 \in \mathbb{R}^d$, one gets a hyperplane $H_{x_1,x_2}$ defined by normal vector $x_1 - x_2$ and constant $\frac{1}{2}(||x_1||^2 - ||x_2||^2)$. In particular, when $\gamma_1$ and $\gamma_2$ are centroids for two classes, then $H_{\gamma_1,\gamma_2}$ is the decision boundary of the associated 2-way ProtoNets model (or alternatively the Voronoi partition corresponding to two points). %In this case $H_{\gamma_1,\gamma_2}^+$ (respectively $H_{\gamma_1,\gamma_2}^-$) is the open set consisting of all those points in $\mathbb{R}^d$ that are predicted as belonging to class $1$ (resp. class $2$).

Let $x$ be a point in $\mathbb{R}^d$ and let $\gamma_1, \dots, \gamma_{k}, \gamma_{oos}$ be a list of prototypes and generic point. Let $\mathcal{S}_{k+1}$ be the symmetric group on (or permutations of) $k+1$ elements. There is a trivial bijection between $\mathcal{S}_{k+1}$ and total orderings of $\gamma_1, \dots, \gamma_{k}, \gamma_{oos}$. In particular, for permutation $\sigma \in \mathcal{S}_{k+1}$, we associate $\sigma$ with the order $\gamma_{\sigma(1)} < \gamma_{\sigma(2)} < \dots < \gamma_{\sigma(oos)}$ where we write $\sigma(i) = j$ to represent the value $j \in \{1,\dots,oos\}$ that $\sigma$ permutes $i$ to (we use index $oos$ and $k+1$ interchangeably).

\begin{proposition}\label{prop-decomp}
Let $\gamma_1, \dots, \gamma_k, \gamma_{oos} \in \mathbb{R}^d$ be a finite list of prototypes and generic point. The set of hyperplanes corresponding to each pair of $\gamma_1, \dots, \gamma_k, \gamma_{oos}$ induce a decomposition of $\mathbb{R}^d$ into open (possibly empty) subsets ({\emph{cells}}) $S_{\sigma}$, where $\sigma \in \mathcal{S}_{k+1}$ and 
\begin{equation*}
    S_\sigma := \big\{x \in \mathbb{R}^d \;|\; ||x-\gamma_{\sigma(1)}|| < \dots < ||x-\gamma_{\sigma(oos)}||\big\},
\end{equation*}
as well as a measure zero, closed subset $B$ which is the union of all $H_{\gamma_i,\gamma_j}$ for $i,j \in \{1,\dots, k,oos\}$.
\end{proposition}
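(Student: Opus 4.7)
The plan is to decompose $\mathbb{R}^d$ into the exceptional set $B$ and its complement, then show (a) $B$ is closed and Lebesgue measure zero, (b) every point of $\mathbb{R}^d \setminus B$ lies in exactly one cell $S_\sigma$, and (c) each cell $S_\sigma$ is open. The whole argument is essentially an unpacking of the standard Voronoi-style partition once one translates ``equal distance to two prototypes'' into the linear equation defining $H_{\gamma_i,\gamma_j}$.

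First I would dispense with $B$. Using the formula for $H_{x_1,x_2}$ recalled just before the proposition, each hyperplane $H_{\gamma_i,\gamma_j}$ (for $\gamma_i \neq \gamma_j$) is the zero locus of a nonzero affine functional on $\mathbb{R}^d$, hence a $(d-1)$-dimensional affine subspace, hence closed and of Lebesgue measure zero. Since $B$ is the finite union of these $\binom{k+1}{2}$ hyperplanes, it inherits both properties immediately.

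Next I would establish the cell decomposition. For $x \notin B$, the defining condition $x \notin H_{\gamma_i,\gamma_j}$ means $\|x-\gamma_i\| \neq \|x-\gamma_j\|$ for every distinct pair $(i,j)$, so the $k+1$ distances $\|x-\gamma_i\|$ are pairwise distinct and admit a unique increasing ordering. The associated permutation $\sigma \in \mathcal{S}_{k+1}$ is then the unique element of $\mathcal{S}_{k+1}$ with $x \in S_\sigma$, giving both existence and uniqueness. Conversely, the strict inequalities appearing in the definition of $S_\sigma$ exclude any equality of distances, so $S_\sigma \cap B = \emptyset$, and distinct permutations produce incompatible orderings so the cells are pairwise disjoint. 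To show each $S_\sigma$ is open, I would use transitivity to reduce the full chain of inequalities to the $k$ consecutive ones $\|x-\gamma_{\sigma(i)}\| < \|x-\gamma_{\sigma(i+1)}\|$; squaring turns each into the strict linear inequality $\langle x, \gamma_{\sigma(i)} - \gamma_{\sigma(i+1)}\rangle > \tfrac{1}{2}(\|\gamma_{\sigma(i)}\|^2 - \|\gamma_{\sigma(i+1)}\|^2)$, which cuts out one of the open half-spaces determined by $H_{\gamma_{\sigma(i)},\gamma_{\sigma(i+1)}}$. Thus $S_\sigma$ is a finite intersection of open half-spaces, hence open (and possibly empty, as the proposition permits).

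There is no real obstacle; the only things worth being explicit about are the square-vs.\ unsquared distance equivalence linking the $S_\sigma$ to the half-spaces of the $H_{\gamma_i,\gamma_j}$, and the tacit assumption that the $k+1$ points $\gamma_1,\dots,\gamma_k,\gamma_{oos}$ are pairwise distinct so that every $H_{\gamma_i,\gamma_j}$ is well-defined (otherwise $B$ would have to be replaced with all of $\mathbb{R}^d$ for any coincident pair, which contradicts the measure-zero claim).
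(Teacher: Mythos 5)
Your proof is correct and follows essentially the same route as the paper's: the dichotomy between points with some pair of equal distances (which land in $B$) and points with pairwise distinct distances (which land in a unique $S_\sigma$), with the paper deferring the openness of $S_\sigma$ and the closed/measure-zero properties of $B$ to ``elementary topology/measure theory'' where you spell them out via half-spaces and finite unions of hyperplanes. Your closing remark that the $\gamma_i$ must be pairwise distinct for the hyperplanes $H_{\gamma_i,\gamma_j}$ to be well-defined is a fair observation about an implicit hypothesis, but it does not change the argument.
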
 

The decomposition described in Proposition \ref{prop-decomp} can be used to describe those regions of $\mathbb{R}^d$ that can lead to the correct classification of an encoded point in different versions of the ProtoNet problem. As we will see, these regions differ substantially between the classic ProtoNets problem and ProtoNets with generic point. We call a point $x \in \mathbb{R}^d$, {\emph{$i$-viable}} if encoding a class $i$ query point $q$ such that $h_\theta(q) = x$ results in the correct prediction that $q$ belongs to class $i$, if class $i$ is represented in the support, or that $q$ is OOS, if class $i$ is not represented in the support. A point is called {\emph{viable}} if it is $i$-viable for some $i \in \{1,\dots,k\}$. A set of points $U$ is called {\emph{$i$-viable}} if every point in $U$ is $i$-viable and {\emph{viable}} if every point in $U$ is viable.
%{\color{red} Slightly confusingly, we use ``viability'' to refer to \emph{regions/cells} later, not just points. Probably just add that an ($i$-)viable region is one comprised entirely of ($i$-)viable points.}
% Thanks, I think I addressed this.

\begin{itemize}[leftmargin=13pt,topsep=0pt]
\item {\emph{Standard ProtoNets}}: For a point belonging to class $i$ to be predicted correctly, it must lie in a cell of the form $S_\sigma$ with $\sigma(1) = i$. Note that outside of measure-zero set $B$, every point in $\mathbb{R}^d$ is $i$-viable for some $i \in \{1,\dots,k\}$ since every cell $S_\sigma$ consists of points closest to some centroid (i.e. $\sigma(1) = j$ for some $j \in \{1,\dots,k\}$) and in the setting where OOS examples do not exist, a point belonging to class $i$ is always classified correctly if it is closer to centroid $\gamma_i$ than it is to any other centroid.
%{\color{red}Since it confused me at first: maybe worth reiterating that in protonets since $Q^{in}=Q$ the second condition of viability is irrelevant.}
% Thanks this is a good catch.
\item {\emph{ProtoNets with generic point}}: For a point belonging to class $i$ to be predicted correctly both when its prototype is present and also when it is not, it must satisfy inequality \eqref{eqn-generic-condition}.  This means that it must lie in a cell of the form $S_\sigma$ with $\sigma(1) = i$ and $\sigma(2) = oos$. Note that this condition means that even outside of $B$, there are non-viable regions of $\mathbb{R}^d$.
For example, if $\sigma(2) \neq oos$.%Specifically, a cell $S_\sigma$ with $\sigma(2) \neq oos$ is not viable.
\end{itemize}

We illustrate these differences in Figure \ref{fig-decision-boundaries} for the standard ProtoNets problem (left) and  ProtoNets with generic point (right). We put boxes around the label of viable regions in each diagram.

\begin{figure*}[t]
\centering
\includegraphics[width=.77\columnwidth]{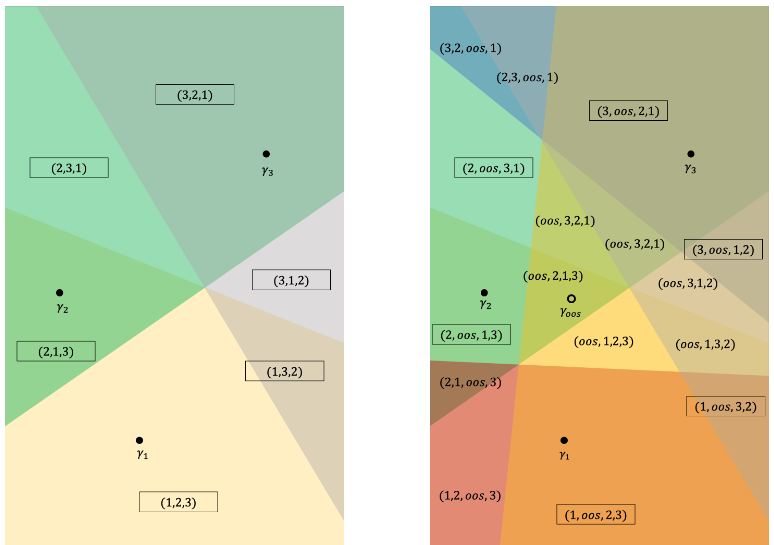}
\caption{Low dimensional illustrations of the feature space decision boundaries of the (left) standard ProtoNet problem with three prototypes, (right) the ProtoNet problem with generic point. In each region we label the ordering the closest prototypes/generic point. We box the labels of regions which are viable.}
\label{fig-decision-boundaries}
\end{figure*}

\begin{proposition} \label{prop-viable-regions}
Let $\{1,\dots,k\}$ be a set of classes and let $\gamma_{oos} \in \mathbb{R}^d$ be a generic point.
\begin{enumerate}[leftmargin=13pt,noitemsep,topsep=0pt]
    \item \label{item-prop-viable-regions} In the standard ProtoNets problem, the set of $i$-viable points is always nonempty for each choice of distinct prototypes $\gamma_1, \dots, \gamma_k \in \mathbb{R}^d$ and for all $i \in \{1,\dots,k\}$. 
    %{\color{red} Do we need to explicitly rule out infinitely many prototypes? You could get weird stuff if the prototypes approach $\gamma_{oos}$ in the limit.}
    %Thanks, I addressed this
    \item In the ProtoNets with generic point problem, there are choices $k$ and distinct $\gamma_1, \dots, \gamma_k, \gamma_{oos} \in \mathbb{R}^d$ for which the $i$-viable region of $\mathbb{R}^d$ is the empty set for some $i \in \{1,\dots,k\}$. There are also choices of distinct $\gamma_1, \dots, \gamma_k, \gamma_{oos}$ such that there is a nonempty $i$-viable region for each $i$.
\end{enumerate}
\end{proposition}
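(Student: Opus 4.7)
The plan is to prove part (1) by producing, for each $i$, an explicit $i$-viable point, and to prove part (2) by exhibiting two concrete configurations: one where some $i$-viable region is empty and one where every $i$-viable region is nonempty. The main obstacle is finding a transparent emptiness example, since the chain in \eqref{eqn-generic-condition} combines three inequalities that are typically easy to satisfy in generic position, so I would look for a degenerate collinear arrangement that rules it out.

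For part (1), I would show that each prototype $\gamma_i$ is itself $i$-viable in the standard ProtoNets problem. Distinctness of $\gamma_1, \dots, \gamma_k$ gives $0 = ||\gamma_i - \gamma_i|| < ||\gamma_i - \gamma_j||$ for every $j \neq i$, so $\gamma_i$ lies in the open Voronoi cell of $\gamma_i$, which coincides with the $i$-viable region for the standard problem (recall that $i$-viability in this setting only requires $\sigma(1) = i$). By openness, an entire neighborhood of $\gamma_i$ is $i$-viable as well, so the $i$-viable set is in fact nonempty with nonempty interior.

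For the emptiness half of part (2), I would take $k = 2$ and place $\gamma_1, \gamma_2, \gamma_{oos}$ collinearly with $\gamma_2$ strictly between $\gamma_1$ and $\gamma_{oos}$; the concrete choice $\gamma_1 = (0, 0, \ldots, 0)$, $\gamma_2 = (5, 0, \ldots, 0)$, $\gamma_{oos} = (10, 0, \ldots, 0)$ in $\mathbb{R}^d$ works. The first inequality of \eqref{eqn-generic-condition} reduces to the half-space $x_1 < 5$ (closer to $\gamma_1$ than to $\gamma_{oos}$), while the second reduces to $x_1 > 7.5$ (closer to $\gamma_{oos}$ than to $\gamma_2$); since these half-spaces are disjoint, no point is $1$-viable. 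For the nonempty half, I would set $\gamma_{oos} = 0$ and $\gamma_i = R e_i$ for the standard basis in $\mathbb{R}^d$ (with $d \geq k$) and any $R > 0$. At $x = \gamma_i$ the three distances appearing in \eqref{eqn-generic-condition} evaluate to $0$, $R$, and $R\sqrt{2}$ respectively, so the inequality chain holds and $\gamma_i$ witnesses nonemptiness of the $i$-viable region for each $i$.
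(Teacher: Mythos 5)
Your proposal is correct and follows essentially the same route as the paper: part (1) via a small neighborhood of $\gamma_i$ in its Voronoi cell, the emptiness half of part (2) via a collinear configuration in which the two half-space conditions of \eqref{eqn-generic-condition} are disjoint, and the nonemptiness half via a symmetric arrangement with $\gamma_{oos}$ at the center. The only (minor) improvement is that your standard-basis example $\gamma_i = Re_i$, $\gamma_{oos}=0$ works for arbitrary $k$ with $d \geq k$, whereas the paper's nonempty example is the specific case $k=4$, $d=2$.
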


Thus we see that the introduction of a generic points puts additional constraints on how a model can arrange prototypes in feature space, with some arrangements being not only non-optimal, but actually precluding correct predictions.

Our final proposition shows that the radial pattern shown in Figure \ref{fig-feature-space} actually represents general geometric structure induced by the generic point problem. For fixed $\gamma_1, \dots, \gamma_k, \gamma_{oos} \in \mathbb{R}^d$ we call the region of $\mathbb{R}^d$ which consists of points that are closer to $\gamma_{oos}$ than to any $\gamma_1, \dots, \gamma_k$ the {\emph{OOS-core}} (note that as a corollary to Proposition \ref{prop-viable-regions}.\ref{item-prop-viable-regions} this always exists). We call two sets $U,V \subset \mathbb{R}^d$ {\emph{adjacent}} if there is a point $p \in \mathbb{R}^d$ such that for any $\epsilon > 0$, the open ball $B_\epsilon(p)$ contains points from both $U$ and $V$.

\begin{proposition} \label{prop-relationship between cells}
If $\gamma_1, \dots, \gamma_k, \gamma_{oos} \in \mathbb{R}^d$ are a choice of distinct prototypes/generic point such that the set $P_i$ of $i$-viable points is non-empty for $i \in \{1,\dots,k\}$, then $P_i$ is adjacent to the OOS core.
\end{proposition}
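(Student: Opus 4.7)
The plan is to construct an explicit witness point on the common boundary of $P_i$ and the OOS core $C_{oos}$ by walking along the straight segment from a point of $P_i$ to $\gamma_{oos}$. First I would unpack both regions as intersections of open half-spaces: $P_i = \{x : ||x - \gamma_i|| < ||x - \gamma_{oos}|| < ||x - \gamma_j|| \text{ for } j \in \{1,\dots,k\}\setminus\{i\}\}$ and $C_{oos} = \{x : ||x - \gamma_{oos}|| < ||x - \gamma_j|| \text{ for } j \in \{1,\dots,k\}\}$. Since the prototypes are distinct, $\gamma_{oos}$ itself lies in $C_{oos}$, which supplies a canonical anchor inside the OOS core.

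Next I would pick any $x_0 \in P_i$ (which exists by hypothesis) and parameterize the segment by $p_t := (1-t)x_0 + t\gamma_{oos}$ for $t \in [0,1]$. The central computation is that for each $j \in \{1,\dots,k\}$ the function $f_j(t) := ||p_t - \gamma_j||^2 - ||p_t - \gamma_{oos}||^2$ is affine in $t$: expanding via $||u-\gamma||^2 = ||u||^2 - 2u\cdot\gamma + ||\gamma||^2$, the $||p_t||^2$ contributions cancel between the two terms, leaving a linear function of $t$. This affine reduction is the heart of the argument.

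I would then read off the endpoints. At $t=0$, because $x_0 \in P_i$, one has $f_j(0) > 0$ for $j \neq i$ and $f_i(0) < 0$. At $t=1$, $p_1 = \gamma_{oos}$, so $f_j(1) = ||\gamma_{oos} - \gamma_j||^2 > 0$ for every $j$. Affinity forces $f_j(t) > 0$ on all of $[0,1]$ whenever $j \neq i$, while $f_i$ has a unique root $t^* \in (0,1)$. Setting $p := p_{t^*}$, every $C_{oos}$-defining inequality holds at $p$, with the $\gamma_i$-vs-$\gamma_{oos}$ comparison tight and the rest strict. A short continuity argument then closes the proof: for any $\epsilon > 0$, a small enough $\delta$ makes $p_{t^* - \delta} \in B_\epsilon(p) \cap P_i$ (the nudge flips $f_i$ to negative while the other strict inequalities survive) and $p_{t^* + \delta} \in B_\epsilon(p) \cap C_{oos}$ (the nudge flips $f_i$ to positive).

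The one delicate step I anticipate is the side condition that along the whole segment no prototype $\gamma_j$ with $j \neq i$ becomes closer to $p_t$ than $\gamma_{oos}$; this is the place the argument could fail a priori, and the affine-in-$t$ observation is precisely what rules it out, since each $f_j$ is linear and positive at both endpoints. Everything else reduces to the open half-space description of $P_i$ and $C_{oos}$ combined with the fact that $\gamma_{oos} \in C_{oos}$.
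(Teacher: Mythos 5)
Your proposal is correct and takes essentially the same route as the paper's proof: both walk along the segment from a point of $P_i$ to $\gamma_{oos}$ and exhibit the point where that segment crosses the bisecting hyperplane $H_{\gamma_i,\gamma_{oos}}$ as the witness for adjacency, with points before the crossing in $P_i$ and points after it in the OOS core. Your affine-in-$t$ computation for $f_j(t)$ is simply an algebraic repackaging of the paper's Lemmas~\ref{lem:hyperplane} and~\ref{lem:midpoint}, which express the same fact via the convexity of the open half-spaces cut out by each $H_{\gamma_i,\gamma_j}$.
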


\section{Limitations and broader impacts}
\label{sect-limitation}

As indicated in Section \ref{subsect-generalization-experiments} all existing OOS detection methods struggle when applied to data that is significantly different from the model's test set. This is a major obstacle blocking the use of these models in many science domains. While Centered GROOS represents progress, we suspect that more sophisticated methods will need to be developed in the future in order to achieve satisfactory results. Additionally, our theoretical analysis only addressed limited aspects of the GROOS model. In Section \ref{subsect-how-do-prototypes-work} for example, we restrict ourselves to a study of fixed prototypes. Finally, in future work we would like to compare our metric-based approaches to OOS detection, to metalearning methods developed to address related problems \cite{jeong2020ood}. 

Our work, like other work in few-shot learning, has the capability to broaden access to deep learning tools by lowering training data requirements. This can be positive because it means that groups without large data acquisition and computing budgets can apply deep learning to solve problems and negative because those uses of machine learning with negative societal impacts now require less data.

%In terms of our experiments and analysis, it would be good in future work to train and test GROOS on datasets beyond ImageNet, CIFAR100, Omniglot, and CUB 200 Birds. Also, in our 

\section{Conclusion}

In many situations, the ability to detect OOS examples is a necessary requirement for deployment of few-shot learning models. In this paper we showed that in the metric-based setting, GROOS and its variant Centered GROOS are two methods that begin to address this challenge. Despite the fact that our models, on average, outperformed existing approaches, we believe OOS detection is a challenge that deserves more attention within the few-shot community, since effective solutions will enable broader adoption of few-shot methods for real-world science and engineering applications.

%Nevertheless, we hope that centered GROOS, informed by our geometric analysis in Section \ref{subsect-how-do-prototypes-work} will point the way forward for future work.

%\begin{ack}
%This work was funded by the U.S. Government.
%\end{ack}
\section*{Acknowlegments}
This work was funded by the U.S. Government.

\bibliographystyle{plain}
\bibliography{OOD}

%%%%%%%%%%%%%%%%%%%%%%%%%%%%%%%%%%%%%%%%%%%%%%%%%%%%%%%%%%%%
\newpage

\appendix

\section{Appendix}

\subsection{Encoder size} \label{appendix-encoder-size}

Given that much of the metric-based few-shot learning literature uses small encoders, in Figure \ref{fig-encoder-size-same-train-test} we include results for the ``standard'' few-shot experiments using a 4-Conv encoder (as in  \cite{snell2017prototypical,wang2019out}) rather than the ResNet18 encoder used in Section \ref{sect-experiments-standard}. Interestingly, we find that with a smaller encoder the LCBO method does significantly better relative to other approaches, indicating that learning decision boundaries for OOS detection may be a more effective strategy in either a lower dimensional feature space or for less rich encoders. In future work it would be interesting to investigate whether attaching a larger MLP helps LCBO scale to larger encoders. Of course, including more fully-connected layers quickly becomes expensive which would be a potential downside of this method.

% SCOTT-COMMENT: This no longer seems to be true, using corrected Omniglot results. All models get AUROC/AUPR of roughly 0.98-0.99+. So probably not an interesting result in a vacuum, except to say it's an easy task. However, the fact that it's so easy in a simple train-test setting motivates its use in the transfer setting - models perform vastly worse in the latter context.
%Somewhat strikingly we find that GROOS and centered GROOS do significantly better (both in terms of AUPR and AUROC) than all other approaches when trained and tested on few-shot splits of the Omniglot dataset. [FINISH ME WHEN CORRECT NUMBERS ARE FOUND]

\begin{table*}
    \centering
    {\def\arraystretch{1.5}%
    \setlength{\tabcolsep}{4pt}
    \begin{tabular}{ccccccc}
 & \multicolumn{2}{c}{CIFAR100}  & \multicolumn{2}{c}{CUB-200} & \multicolumn{2}{c}{Omniglot} \\
 & AUPR & AUROC  & AUPR & AUROC  & AUPR & AUROC \\
 \hline
MinDist & 66.85 & 67.18 & 64.51 & 66.73 & 95.94 & 95.90  \\
LCBO & \textbf{75.15$^*$} & \textbf{75.89$^*$} & \textbf{68.98$^*$}  & \textbf{71.96$^*$} & \textbf{98.98$^*$} & \textbf{99.10$^*$}    \\
\hline
\textbf{Background (ours)} & 67.71 & 65.97 & 61.26 & 60.38 & 98.38 & 98.29\\
\textbf{GROOS (ours)} & 74.15 & 74.95 & 67.27 & 66.55 & \textbf{98.81} & \textbf{98.77}  \\
\textbf{Centered GROOS (ours)} & 70.36 & 71.41 & 65.34  & 65.38 & \textbf{98.65} & \textbf{98.62} \\
\hline
\end{tabular}}
    \caption{Results for the same set of experiments reported in Table \ref{fig-same-dataset} but using a 4-Conv encoder rather than a ResNet18 encoder. We put an $*$ next to the top score in each column and set in bold all the rest of the scores in the column that are within $0.5$ of this.}
    \label{fig-encoder-size-same-train-test}
\end{table*}

We also repeated the generalization experiments from Section \ref{subsect-generalization-experiments} (which also used a ResNet18 encoder) with a $4$-Conv encoder and a ResNet50 encoder. We summarize our results in Figures \ref{fig-generalization-other-encoders-AUROC} and \ref{fig-generalization-other-encoders-AUPR}. The logic behind our choice to also test larger encoders in this setting stemmed from the observation that in tasks that require higher levels of generalization, large encoders can sometimes yield better results \cite{hendrycks2020many}. We find that larger encoders do tend to slightly improve performance in terms of AUROC and AUPR. With the exception of AUPR for the Aircraft dataset where MinDist performed slightly better than Centered GROOS when we used a ResNet50 encoder instead of a ResNet18 encoder, the top performing model on a dataset did not change based on whether one used a larger encoder. It is perhaps notable that the Aircraft dataset is also one of the few examples where model performance decreased when using a ResNet50 encoder rather than a ResNet18 encoder.

Distinct from the pattern we observed in Figure \ref{fig-encoder-size-same-train-test}, in this setting using a smaller encoder did not appear to result in much better performance for LCBO. With the exception of its performance on ImageNet itself, which does not require the same level of generalization, LCBO did not out-perform other methods on any of the datasets. We suspect that this arises from the fact that learning decision boundaries is not an approach that transfers well to significantly different datasets. Similar to the results from Section \ref{subsect-generalization-experiments} we observe that the top models in terms of generalization were Centered GROOS and MinDist suggesting that centered generic points and raw distance are better able to capture ``different-ness'' across datasets. We also observe that in the smaller encoder setting, MinDist is more competitive with Centered GROOS.

\begin{table*}
    \centering
    {\def\arraystretch{1.5}%
    \setlength{\tabcolsep}{4pt}
    \begin{tabular}{cccccccc}
&  ImageNet & CIFAR100  & Omniglot &  Aircraft & Textures & Fruits\\
 \hline
  & & 4-Conv encoder & & & & \\
MinDist & 71.28 & 63.36 & \textbf{67.45$^*$} & 54.07 & 57.29 & \textbf{95.97$^*$} \\
LCBO &  \textbf{75.51} & 60.36 & 58.68 & 52.49 & 58.16 & 87.35 \\
\textbf{Background (ours)} & 61.78 & 60.78 & 65.10 & 52.95 & 55.75 & 92.57\\
\textbf{GROOS (ours)} & \textbf{75.99$^*$} & 59.22 & 61.44 & 53.59 & 59.44 & 90.88 \\
\textbf{Centered GROOS (ours)} & 61.78 & \textbf{64.80$^*$} & \textbf{67.03} & \textbf{54.87$^*$} & \textbf{61.77$^*$} & 94.80 \\
\hline
 & & ResNet50 encoder & & & & \\
MinDist & \textbf{97.51} & 82.33 & \textbf{84.54$^*$} & 58.49 & 76.46 & 92.57 \\
LCBO & 95.66 & 79.54 & 73.93 & 55.72 & 74.48 & 90.72 \\
\textbf{Background (ours)} & 95.57 & 79.11 & 60.61 & 53.86 & 78.48 & 92.30 \\
\textbf{GROOS (ours)} & \textbf{97.76$^*$} & 79.17 & 78.55 & 53.38 & 80.60 & 94.19 \\
\textbf{Centered GROOS (ours)} & 96.96 & \textbf{84.20$^*$} & 81.36 & \textbf{59.07$^*$} & \textbf{84.17$^*$} & \textbf{96.40$^*$} \\
\end{tabular}}
    \caption{AUROC results for the same set of experiments reported on in Table \ref{fig-generalization-AUROC} but using a 4-Conv encoder (top) and ResNet50 encoder (bottom) rather than a ResNet18 encoder. We put an $*$ next to the top score in each column and set in bold all the rest of the scores in the column that are within $0.5$ of this.}
    \label{fig-generalization-other-encoders-AUROC}
\end{table*}

\begin{table*}
    \centering
    {\def\arraystretch{1.5}%
    \setlength{\tabcolsep}{4pt}
    \begin{tabular}{cccccccc}
&  ImageNet & CIFAR100  & Omniglot &  Aircraft & Textures & Fruits\\
 \hline
  & & 4-Conv encoder & & & & \\
MinDist & 70.44 & 62.67 & \textbf{69.71$^*$} & \textbf{53.84$^*$} & 57.27 & \textbf{95.97$^*$}  \\
LCBO & \textbf{74.50$^*$} & 58.58 & 57.04  & 52.20 & 57.31 &  87.48  \\
\textbf{Background (ours)} & 60.67  & 59.07 & 61.99 & 52.39 & 55.75 & 91.66 \\
\textbf{GROOS (ours)} & \textbf{75.48} & 58.62 & 60.55 & 53.10 & 59.12 & 91.66  \\
\textbf{Centered GROOS (ours)} & 70.44 & \textbf{63.17$^*$} & 64.04 & \textbf{53.73} & \textbf{59.46$^*$} & 94.73 \\
\hline
 & & ResNet50 encoder & & & & \\
MinDist & \textbf{97.63} & 81.41 & \textbf{85.62$^*$} &  \textbf{58.72$^*$} & 78.77 & 91.66  \\
LCBO & 95.28 &77.56  & 71.53 & 55.17 & 73.48 & 91.25   \\
\textbf{Background (ours)} & 95.63  & 79.26 & 77.26 & 57.87 & 77.09 & 95.08 \\
\textbf{GROOS (ours)} & \textbf{97.68$^*$} & 77.16 & 79.53 & 54.63 &76.87  & 94.51  \\
\textbf{Centered GROOS (ours)} & 96.75 & \textbf{83.27$^*$} & 81.19 & \textbf{58.24} & \textbf{82.02$^*$} & \textbf{96.48$^*$} \\
\end{tabular}}
    \caption{AUPR results for the same set of experiments reported on in Table \ref{fig-generalization-AUPR} but using a 4-Conv encoder (top) and ResNet50 encoder (bottom) rather than a ResNet18 encoder. We put an $*$ next to the top score in each column and set in bold all the rest of the scores in the column that are within $0.5$ of this.}
    \label{fig-generalization-other-encoders-AUPR}
\end{table*}

\subsection{Proofs from Section \ref{subsect-how-do-prototypes-work}}
\label{sect-proofs}

\begin{proof}[Proof of Proposition \ref{prop-decomp}]
For any $x \in \mathbb{R}^d$, either (1) there are at least two $\gamma_i, \gamma_j$ for $i,j \in \{1,\dots,k,oos\}$ such that $||x - \gamma_i|| = ||x - \gamma_j||$ or (2) for all $\gamma_i, \gamma_j$ either $||x-\gamma_i|| > ||x - \gamma_j||$ or $||x-\gamma_i||<||x-\gamma_j||$. In the former case, $x \in B$ since $x$ belongs to $H_{\gamma_i,\gamma_j}$ as this hyperplane consists precisely of those $x'$ such that $||x'-\gamma_i|| = ||x' - \gamma_j||$. In the latter case the set  
\begin{equation*}
D = \big\{||x - \gamma_i|| \;| \; i \in \{1,\dots,k,oos\}\big\}
\end{equation*}
consists of distinct real numbers. It is clear that these numbers can be ordered so that they are strictly increasing. Denote by $\sigma$ the permutation from $\mathcal{S}_{k+1}$ such that
\begin{equation*}
||x-\gamma_{\sigma(1)}|| < ||x-\gamma_{\sigma(2)}|| < \dots < ||x-\gamma_{\sigma(oos)}||.
\end{equation*}
Then $x \in S_\sigma$. This shows that the union of $B$ and each set in $\{S_\sigma \;|\; \sigma \in \mathcal{S}_{k+1}\}$ is equal to $\mathbb{R}^d$. Using the distance parametrization of each $S_\sigma$ based on $\sigma$, it is also clear that $B$ is disjoint from each $S_\sigma$ and that furthermore, $S_\sigma \cap S_\tau = \emptyset$ when $\sigma \neq \tau$.

The fact that each $S_\sigma$ is open, and $B$ is closed and measure zero follows from elementary topology/measure theory.

\end{proof}

\begin{proof}[Proof of Proposition \ref{prop-viable-regions}] \mbox{}
\begin{enumerate} 
\item If $\gamma_1, \dots, \gamma_k$ are distinct from each other, then for any $i \in \{1,\dots,k\}$, we can choose $\epsilon > 0$ sufficiently small such that for all points $x \in B_\epsilon(\gamma_i)$ we have that $||x - \gamma_i|| < ||x - \gamma_j||$ for each $j \in \{1,\dots,k\}$ with $j \neq i$. Observe that 
\begin{equation*}
B_\epsilon(x) \subseteq \bigcup_{\substack{\sigma \in \mathcal{S}_{k+1} \\ \sigma(1) = i}} S_\sigma,
\end{equation*}
that is, $B_\epsilon(x)$ belongs to the $i$-viable region of $\mathbb{R}^d$. Hence the $i$-viable region is non-empty.
\item We give two examples, in the first there exists an element $i \in \{1,\dots,k\}$ such that the $i$-viable region is empty. In the second, for each $i \in \{1,\dots,k\}$, the $i$-viable region is not empty. In both cases we leave it to the reader to verify the example.
\begin{enumerate}
\item Consider the case $d = 2$, $k=2$, $\gamma_{oos} = (1,0), \gamma_1 = (0,0)$, and $\gamma_2 = (-1,0)$. It can be checked that in this case the $2$-viable region consists of those points that are both to the left of the line $x = (0,0)$ and to the right of the line $x = (\frac{1}{2},0)$. This set is of course empty. %{\color{red} Could even just use $\mathbb R^1$! :)}
\item Consider the case $d = 2$, $k=4$, $\gamma_{oos} = (0,0), \gamma_1 = (1,0), \gamma_2 = (0,1), \gamma_3 = (-1,0)$, and $\gamma_4 = (0,-1)$. Elementary calculations show that the $1$-viable region is nonzero and defined by the inequalities $y > -\frac{1}{2}$, $y < \frac{1}{2}$, and $x > \frac{1}{2}$. The $2$, $3$, and $4$-viable regions can be obtained from the $1$-viable region via symmetry transformations.
\end{enumerate}
\end{enumerate}
\end{proof}

%\begin{proof}[Proof of Proposition \ref{prop-relationship between cells}]
%Without loss of generality, consider the $i$-viable region $P_i$ for $i \in \{1,\dots,k\}$. $P_i$ is bounded by some subset of the hyperplanes $H_{\gamma_p,\gamma_q}$ such that a transposition of adjacent $\gamma_p$ and $\gamma_q$ in the ordering
%\begin{equation*}
%||x-\gamma_i|| < ||x-\gamma_{oos}|| < ||x -\gamma_{i_1}|| < \dots < ||x - \gamma_{i_{k-1}}||
%\end{equation*}
%results in the ordering no longer being $i$-viable. Clearly this can only happen when either a hyperplane $H_{\gamma_{oos},\gamma_j}$ for $j \neq i$ is crossed which results in an ordering
%\begin{equation*}
%||x-\gamma_i|| < ||x-\gamma_{j}|| < ||x -\gamma_{oos}|| < \dots < ||x - \gamma_{i_{k-1}}||
%\end{equation*}
%or the hyperplane $H_{\gamma_i,\gamma_{oos}}$ is crossed, which results in
%\begin{equation*}
%||x-\gamma_{oos}|| < ||x-\gamma_{j}|| < ||x -\gamma_{oos}|| < \dots < ||x - \gamma_{i_{k-1}}||.
%\end{equation*}
%It follows from this that the boundary of $P_i$ is formed by hyperplanes $H$

%\end{proof}

%\begin{proposition}\label{prop:adjacent}
%    Let $\gamma_1,\dots,\gamma_k$ be set of distinct prototypes and $\gamma_{oos}$ be a generic representation. For any $i \in \{1,\dots,k\}$, if the $i$-viable region $P_i$ is nonempty, then it is adjacent to the OOS core.
%\end{proposition}
To prove Proposition \ref{prop-relationship between cells}, we need to establish a couple short lemmas:

\begin{lemma}\label{lem:hyperplane}
    Let $\gamma_i$ and $
    \gamma_j$ be distinct prototypes. If two points $x$ and $y$ satisfy the inequalities
\begin{equation*}
||x-\gamma_i||<||x-\gamma_j||\quad\text{and}\quad||y-\gamma_i||<||y-\gamma_j||,
\end{equation*}
    then for any point $z$ on the line segment $\ell$ connecting these two points,
\begin{equation}
\label{lemma-inequality}
    ||z-\gamma_i||<||z-\gamma_j||.
\end{equation}
    If, instead,
\begin{equation*}
    ||x-\gamma_i||=||x-\gamma_j||\quad\text{and}\quad||y-\gamma_i||<||y-\gamma_j||,
\end{equation*}
the strict inequality \eqref{lemma-inequality} holds at every point on $\ell \setminus \{x\}.$
\end{lemma}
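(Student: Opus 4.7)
The plan is to reduce the distance comparison to a linear (affine) inequality, after which both statements follow from the convexity of an open half-space together with linearity along the segment.

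First I would expand the squared-distance comparison: for any $w\in\mathbb{R}^d$, the inequality $\|w-\gamma_i\|<\|w-\gamma_j\|$ is equivalent (by squaring, which is order-preserving on nonnegative reals) to
\begin{equation*}
2\langle w,\gamma_j-\gamma_i\rangle \;<\; \|\gamma_j\|^2-\|\gamma_i\|^2.
\end{equation*}
Writing $\varphi(w):=2\langle w,\gamma_j-\gamma_i\rangle-(\|\gamma_j\|^2-\|\gamma_i\|^2)$, the set of $w$ satisfying the strict inequality is precisely $\{\varphi<0\}$, which is one of the open half-spaces cut out by the hyperplane $H_{\gamma_i,\gamma_j}$ from Section~\ref{subsect-how-do-prototypes-work}. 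Similarly, equality of the two distances corresponds to $\varphi(w)=0$.

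For the first claim, I parametrize the segment as $z(t)=(1-t)x+ty$ for $t\in[0,1]$. Since $\varphi$ is affine,
\begin{equation*}
\varphi(z(t)) \;=\; (1-t)\varphi(x) + t\,\varphi(y).
\end{equation*}
Both $\varphi(x)$ and $\varphi(y)$ are strictly negative by hypothesis, so their convex combination with weights $1-t,t\in[0,1]$ is also strictly negative, giving $\|z(t)-\gamma_i\|<\|z(t)-\gamma_j\|$ for every $t\in[0,1]$.

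For the second claim, the hypothesis becomes $\varphi(x)=0$ and $\varphi(y)<0$, so
\begin{equation*}
\varphi(z(t)) \;=\; (1-t)\cdot 0 + t\,\varphi(y) \;=\; t\,\varphi(y),
\end{equation*}
which is strictly negative for every $t\in(0,1]$, i.e.\ at every point of $\ell\setminus\{x\}$. This yields the required strict inequality on all of $\ell\setminus\{x\}$.

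There is no real obstacle here; the only thing to be careful about is the squaring step (both sides are nonnegative, so the equivalence of inequalities is clean) and the fact that the boundary of the half-space is exactly where $\varphi$ vanishes, which is what lets the second statement upgrade to a strict inequality away from $x$.
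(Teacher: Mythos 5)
Your proof is correct and takes essentially the same route as the paper's: both arguments rest on the fact that $\{w \in \mathbb{R}^d : \|w-\gamma_i\| < \|w-\gamma_j\|\}$ is an open half-space bounded by $H_{\gamma_i,\gamma_j}$ --- the paper invokes the convexity of half-spaces directly, while you make that convexity explicit by writing down the affine function $\varphi$ whose sign determines the side and restricting it to the segment. Your computation $\varphi(z(t)) = t\,\varphi(y) < 0$ for $t \in (0,1]$ in the second case handles the endpoint a bit more cleanly than the paper's geometric discussion of how the segment can meet the hyperplane, but the underlying idea is identical.
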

\begin{proof}
    To prove the first part of the Lemma, notice that both $x$ and $y$ lie on the same side of the hyperplane $H_{\gamma_i,\gamma_j}$. Since a hyperplane splits $\mathbb{R}^d$ into two convex half-spaces, the entire segment $\ell$ lies on a single side of this hyperplane and the result follows. 
    
    The last statement is true since, if the segment does not lie entirely in the plane $H_{\gamma_i,\gamma_j}$, it can only intersect at a single point, $x$ (note that $\ell$ could also lie entirely within $H_{\gamma_i,\gamma_j}$ but we know that the other end point of $\ell$, $y$, is not in $H_{\gamma_i,\gamma_j}$). Since $x$ is the endpoint of the segment, the rest lies in one of the open half spaces, in this case, that whose points satisfy \eqref{lemma-inequality}.
\end{proof}

%\begin{proof}
    %Notice that both $x$ and $y$ lie on the same side of the hyperplane $H_{\gamma_i,\gamma_j}$ that is orthogonal to the segment $\overline{PQ}$ and bisects $\overline{PQ}.$ Since a hyperplane $\mathcal{H}$ splits the space into convex parts, the entire segment $\overline{PQ}$ lies on a single side of this hyperplane and the result follows. 
    
    %The last statement is true since, if the segment does not lie entirely in the plane $\mathcal H$, it can only intersect at a single point, $P$. Since $P$ is the endpoint of the segment, the rest lies in one of the open half spaces.
%\end{proof}

\begin{lemma}\label{lem:midpoint}
    Let the $\gamma_i, \gamma_j$ be as in Lemma~\ref{lem:hyperplane}, $x$ a point in the $i$-viable region and $\gamma_{oos}$ be a distinct generic representation. Let $\ell$ be the line segment between $x$ and $\gamma_{oos}$ and $z$ be the point on $\ell$ where it intersects  $H_{\gamma_{oos}\gamma_i}$. Then the line segment $\ell'$ from $x$ to $z$ is such that for any point $w$ on this segment and for all $j \in \{1,\dots,k\}$ with $j \neq i$,
    \begin{equation*}
        ||w-\gamma_i||<||w-\gamma_{oos}||<||w-\gamma_j||.
    \end{equation*}
    Similarly, if $\ell''$ is the line segment from $z$ to $\gamma_{oos}$, then all $w$ on $\ell''$ satisfy 
    \begin{equation*}
        ||w - \gamma_{oos}|| < ||w - \gamma_j||
    \end{equation*}
    for all $j \in \{1,\dots,k\}$ (including $j = i$).
\end{lemma}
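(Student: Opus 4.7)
The plan is to apply Lemma~\ref{lem:hyperplane} repeatedly to different pairs of centers and different portions of $\ell$. First I record the hypotheses concretely: since $x$ is $i$-viable it satisfies the full chain $||x-\gamma_i||<||x-\gamma_{oos}||<||x-\gamma_j||$ for every $j\neq i$, and by construction $z\in H_{\gamma_i,\gamma_{oos}}$, so $||z-\gamma_i||=||z-\gamma_{oos}||$. I will also use the trivial fact that $||\gamma_{oos}-\gamma_{oos}||=0$ is strictly smaller than $||\gamma_{oos}-\gamma_j||$ for every prototype $\gamma_j$ distinct from $\gamma_{oos}$, in particular for both $j=i$ and $j\neq i$.

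For the $\ell'$ statement I handle the two inequalities separately. The inequality $||w-\gamma_i||<||w-\gamma_{oos}||$ cannot be obtained from the first part of Lemma~\ref{lem:hyperplane}, since equality holds at the endpoint $z$; instead I invoke the second part with the equality endpoint being $z$ and the strict-inequality endpoint being $x$, which gives the desired strict inequality on $\ell'\setminus\{z\}$. For the inequality $||w-\gamma_{oos}||<||w-\gamma_j||$ with $j\neq i$, I apply the first part of Lemma~\ref{lem:hyperplane} (to the pair $\gamma_{oos},\gamma_j$) on the full segment $\ell$: both endpoints $x$ and $\gamma_{oos}$ satisfy this strict inequality, so it holds on all of $\ell$, and in particular on $\ell'\subseteq\ell$.

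For the $\ell''$ statement the case $j\neq i$ is immediate from the previous step, since $\ell''\subseteq\ell$ as well. For $j=i$ the endpoints of $\ell''$ are $z$, where equality holds, and $\gamma_{oos}$, where the trivial strict inequality $0<||\gamma_{oos}-\gamma_i||$ holds; the second part of Lemma~\ref{lem:hyperplane} applied to the pair $\gamma_{oos},\gamma_i$ then yields $||w-\gamma_{oos}||<||w-\gamma_i||$ on $\ell''\setminus\{z\}$.

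The only real obstacle is bookkeeping: choosing the right pair of centers and the right portion of $\ell$ each time, and noticing that $z$ plays the role of the equality endpoint in exactly the two applications that require the second part of Lemma~\ref{lem:hyperplane}. I should also flag that the phrases ``any $w$ on $\ell'$'' and ``all $w$ on $\ell''$'' in the statement cannot literally include $w=z$ for the inequalities involving $\gamma_i$, since equality holds there by construction; the proof therefore establishes the inequalities on $\ell'\setminus\{z\}$ and $\ell''\setminus\{z\}$ respectively, which is the strongest statement compatible with how $z$ was chosen.
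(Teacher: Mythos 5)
Your proof is correct and follows essentially the same route as the paper's: both arguments reduce everything to repeated applications of Lemma~\ref{lem:hyperplane} to the pairs $(\gamma_i,\gamma_{oos})$ and $(\gamma_{oos},\gamma_j)$, using $x$, $z$, and $\gamma_{oos}$ as the endpoint data. Your observation that the strict inequalities involving $\gamma_i$ can only hold on $\ell'\setminus\{z\}$ and $\ell''\setminus\{z\}$ is a valid refinement that the paper's wording glosses over, and it is harmless for the downstream use in Proposition~\ref{prop-relationship between cells}.
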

\begin{proof}
    Notice that $\gamma_{oos}$ and $x$ satisfy the inequalities
    \begin{equation*}
    0 = ||\gamma_{oos}-\gamma_{oos}||<||\gamma_{oos}-\gamma_j||\quad\text{and}\quad ||x-\gamma_{oos}||<||x-\gamma_j||
    \end{equation*}
    for any $j \in \{1,\dots,k\}$ with $j\ne i$. Applying Lemma~\ref{lem:hyperplane}, this implies that $z$, which lies on the line segment connecting $x$ an $\gamma_{oos}$, satisfies 
    \begin{equation*}
        ||z - \gamma_{oos}|| < ||z - \gamma_j||.
    \end{equation*}
    Since it lies on $H_{\gamma_{oos},\gamma_i}$ as well, 
    \begin{equation}\label{eq-midpoint}
        ||z-\gamma_i||=||z-\gamma_{oos}||<||z-\gamma_j||.
    \end{equation}
    
    But since $x$ satisfies
    \begin{equation*}
        ||x-\gamma_i||<||x-\gamma_{oos}||<||x-\gamma_j||,
    \end{equation*}
    two applications of Lemma~\ref{lem:hyperplane} yield that for any point $w$ on $\ell'$,
    \begin{equation*}
        ||w-\gamma_i||<||w-\gamma_{oos}||<||w-\gamma_j||.
    \end{equation*}
    This proves the first statement.
    
    Next, returning to \eqref{eq-midpoint}, we see that since
    \begin{equation*}
        0 = ||\gamma_{oos} - \gamma_{oos}|| < ||\gamma_{oos} - \gamma_{j}||
    \end{equation*}
    for any $j \in \{1,\dots,k\}$ (including $i = j$), then by Lemma~\ref{lem:hyperplane}, for all $w$ on $\ell''$ we must have that
    \begin{equation*}
        ||w - \gamma_{oos}|| < ||w -\gamma_{j}||,
    \end{equation*}
    which proves the second statement.
\end{proof}

Using these lemmas, we can prove Proposition~\ref{prop-relationship between cells}:
\begin{proof}
    Let $x$ be a point in the $i$-viable region of $\mathbb{R}^d$ and $z$ be the point on the segment $\ell$ between $x$ and $\gamma_{oos}$ that lies on the hyperplane $H_{\gamma_i,\gamma_{oos}}$. Note that $\ell$ must cross this hyperplane since $x$ lies on one side of $H_{\gamma_i,\gamma_{oos}}$, being closer to $\gamma_i$ than to $\gamma_{oos}$, and $\gamma_{oos}$ lies on the other. 
    
    By Lemma~\ref{lem:midpoint}, all points $w$ of $\ell$ on the same side of $H_{\gamma_i,\gamma_{oos}}$ as $x$ satisfy 
    \begin{equation*}
        ||w-\gamma_i||<||w-\gamma_{oos}||<||w-\gamma_j||,
    \end{equation*}
    for all $j \in \{1,\dots,k\}$ with $j \neq i$. All such points are $i$-viable. All points on $\ell$ on the same side of $H_{\gamma_i,\gamma_{oos}}$ as $\gamma_{oos}$ satisfy 
    \begin{equation*}
        ||w-\gamma_{oos}|| < ||w-\gamma_{j}||
    \end{equation*}
    for all $j \in \{1,\dots,k\}$ including $j = i$. It follows that these points are in the OOS-core. It is clear then that for any $\epsilon > 0$, the ball $B_{\epsilon}(z)$ contains both points from the $i$-viable region of $\mathbb{R}^d$ and the OOS-core. This proves the Proposition.
    
    %To prove the second part, 
    
    %Notice that it suffices to show that $z$ is on the boundary between an $i$-viable region and the OOS-core. To do this, we will show that there is an $\epsilon>0$ such that the open ball $B_\epsilon(z)$ contains points both in the $i$-viable region and the OOS-core. By Lemma \ref{lem:midpoint} 

    %In our proof it will suffice to take $\epsilon < \min\{||z-x||,||z-\gamma_{oos}||\}$. Then that $B_\epsilon(z)$ contains $i$-viable points follows from Lemma~\ref{lem:midpoint} (since the line segment from $x$ to $z$ doesn't lie on $\mathcal H$, so it remains only to prove it contains points from the OOS core.

\end{proof}

%    Notice that it suffices to show that $z$ is on the boundary between an $i$-viable region and the OOS core. To do this, we will show that there is an $\varepsilon>0$ such that for all $0<\eta<\varepsilon$,
    %the open ball $B_\eta(M)$ contains points both in the $i$-viable region and the OOS core. In our proof it will suffice to take $\varepsilon = \min\{|M-P|,|M-\gamma_{oos}|\}.$ Then that $B_\eta(M)$ contains $i$-viable points follows from Lem.~\ref{lem:midpoint} (since $\overline{PM}$ doesn't lie on $\mathcal H$, so it remains only to prove it contains points from the OOS core.
    
    %In order to show this, it is sufficient to show that, for all points on the half-open segment $\overline{M\gamma_{oos}}\setminus M$ that $|M-\gamma_{oos}|<|M-\gamma_k|$ for all $k=1,\dots,n.$ If $k\ne i,$ this follows from an application of Lem.~\ref{lem:hyperplane} since
    %\[|\gamma_{oos}-\gamma_{oos}|<|\gamma_{oos}-\gamma_k|\quad\text{and}\quad |P-\gamma_{oos}|<|P-\gamma_k|\]
    %which shows that it is true on the entire segment $\overline{P\gamma_{oos}}\supset \overline{M\gamma_{oos}}.$
    
    %It remains to see it is true for $k=i$. But since $\overline{M\gamma_{oos}}\not\subseteq \mathcal H,$ the half-open segment $\overline{M\gamma_{oos}}\setminus M$ lies on the side of $\mathcal H$ that is closer to $\gamma_{oos}$ than $\gamma_i$, which completes the proof.
\end{document}